\documentclass{article} 
\usepackage{iclr2027_conference,times}

\usepackage{amsmath,amsfonts,bm}

\def\eqref#1{equation~\ref{#1}}

\def\1{\bm{1}}

\DeclareMathAlphabet{\mathsfit}{\encodingdefault}{\sfdefault}{m}{sl}
\SetMathAlphabet{\mathsfit}{bold}{\encodingdefault}{\sfdefault}{bx}{n}

\usepackage{hyperref}
\usepackage{url}
\usepackage{booktabs}
\usepackage{graphicx}
\usepackage{tikz}
\usepackage{multirow}
\usepackage{amsthm}

\newcommand{\METHOD}{S2MLVM}
\newcommand{\Mask}{\textsc{\METHOD-Mask}}
\newcommand{\Contrast}{\textsc{\METHOD-Contrast}}
\newcommand{\MaskShort}{\textsc{-Mask}}
\newcommand{\ContrastShort}{\textsc{-Contrast}}

\newcommand{\Sig}{\Sigma}

\newtheorem{remark}{Remark}
\newtheorem{lemma}{Lemma}

\title{Structured Latent Modeling for Supervised Multimodal Information Decomposition}

\iclrfinalcopy

\author{Wanting Huang \\
Department of Computer Science\\
University of Iowa\\
\texttt{wanting-huang@uiowa.edu} \\
\And
Sanvesh Srivastava \\
Department of Statistics \\
University of Iowa\\
\texttt{sanvesh-srivastava@uiowa.edu} \\
\AND
Weiran Wang \\
Department of Computer Science\\
University of Iowa\\
\texttt{weiran-wang@uiowa.edu}
}

\begin{document}

\maketitle

\begin{abstract}
Multimodal prediction relies on diverse forms of evidence: information repeated across modalities, cues specific to a single source, and complex cross-modal dependencies that emerge only when inputs are considered together. While recent methods promote richer interactions, they lack a principled way to isolate these target-relative contributions within learned continuous representations. We introduce a framework that applies contrastive or masked objectives at intermediate layers, coupled with source-wise invertible normalizing flows and a supervised, low-rank latent variable model. This architecture explicitly factorizes the joint distribution into shared task-relevant variation, modality-specific predictive variation, and task-irrelevant dependence. Drawing connections to prior multimodal learning assumptions, our approach evaluates how modalities independently and jointly contribute to the target. Ultimately, this framework unites intermediate representation learning with structured likelihood-based guidance, offering a practical latent-variable lens for characterizing continuous multimodal interactions. Empirically, we demonstrate the effectiveness of our approach across diverse multimodal benchmarks, showing robust improvements in predictive performance. 
\end{abstract}

\section{Introduction}
\label{sec:introduction}

Supervised multimodal fusion aims to learn representations that reflect the heterogeneity and interconnections between modalities \citep{liang2024multimodal}, capturing both overlapping (redundant) and modality-specific (unique) information to maximize predictive performance. Traditionally, multi-view learning relies on the redundancy assumption---presuming all task-relevant information is shared across modalities \citep{tosh2021contrastive,federici2020learning}. However, this often fails in real-world supervised settings where modalities carry unique predictive signals. When applied to complex multimodal data, standard late fusion architectures frequently suffer from unimodal bias (also known as modality competition or imbalance \citep{huang2022modality,zhang2024understanding}). During joint training, the fusion network tends to disproportionately rely on a dominant, easily accessible modality, inadvertently suppressing the learning of weaker modalities, and occasionally resulting in worse performance than their single-modality counterparts. A prominent example is visual question answering, where the vision modality is often ignored because the text already correlates strongly with the answer \citep{goyal2017making,cadene2019rubi}.
By analyzing the learning dynamics of deep fusion networks, recent studies~\cite{zhang2024understanding,kontras2025balancing} suggest that inter-modality correlation is the fundamental driver of this unimodal bias.

Recent approaches have attempted to address these challenges by quantifying or isolating multimodal interactions. Unsupervised representation learning methods \citep{liang2023factorized,dufumier2025what,wen2026infmasking} rely on handcrafted, label-preserving perturbations and contrastive learning to extract either a single representation per view, or a fused representation that captures both shared and unique predictive information. 
Meanwhile, supervised approaches use labels to filter out task-irrelevant information and mitigate modality competition through mechanisms such as gradient balancing, prototype-based modality rebalancing, and game-theoretic regularization \citep{wang2020what,Fan_2023_CVPR,kontras2025balancing}. However, they do not learn explicitly disentangled latent variables.
These developments motivate the important research question: \emph{how can a structured model of the source--target distribution support representation learning and mitigate unimodal bias?} Such a model must explicitly disentangle private and shared predictive information, accommodating nonlinear features while distinguishing source--target relationships from other cross-source associations.

To address these limitations, we propose a supervised latent variable model (LVM) framework centered on the \emph{Supervised Structured Multimodal Latent Variable Model} (\METHOD{}). This novel formulation generalizes both unsupervised \citep{BachJordan05a,lock2013joint} and existing supervised \citep{palzer2022sjive} LVMs. Its key design divides each view's latent space into distinct factors: a shared predictive component, a unique predictive component, and a shared nuisance (non-predictive) component.
Marginalizing these factors yields a low-rank-plus-diagonal covariance, imposing a compact dependence structure that accommodates shared source variation without requiring all of it to be directly predictive. Explicitly separating shared and private predictive signals from nuisance variations, our framework enables classifiers to optimally weight each component. Just as single-view disentanglement improves interpretability and sample efficiency \citep{bengio2013representation,locatello2020weakly}, this multimodal factorization directly empowers supervised fusion, boosting generalization and mitigating unimodal bias.
In summary, our main contributions are:
\begin{itemize}
    \item 
    We integrate the structured \METHOD{} covariance model into representation learning, using neural encoders as initial feature extractors coupled with dimension-preserving normalizing flows \citep{dinh2017density,papamakarios2021normalizing}. To accommodate high-dimensional, nonlinear multimodal data and optimize the statistical model jointly with the representations, we develop two instantiations (\Mask{} and \Contrast{}). These optimize a three-term objective combining a task loss, the joint Flow--\METHOD{} density criterion, and an auxiliary representation objective to maintain input information while reducing dimensionality. Both maintain modality-specific encoding without early cross-modal fusion: \Mask{} reconstructs masked clean features within each source, whereas \Contrast{} contrasts augmented multimodal observations.
    \item 
    We theoretically analyze our framework's connection to common assumptions in multimodal learning. By drawing connections to Partial Information Decomposition (PID,~\citealp{Williams2010Nonnegative,bertschinger2014quantifying}), we elucidate the mathematical mechanisms---such as cooperative nuisance suppression and V-structure exploitation---that generate synergistic information under our LVM.
    \item 
    We demonstrate the efficacy of \METHOD{} across synthetic and real-world benchmarks. In controlled settings, it accurately recovers task-relevant subspaces. Across diverse real-world datasets, both instantiations mitigate unimodal bias and outperform strong representation-learning baselines in two- and three-modality scenarios.
\end{itemize}

\section{Method}
\label{sec:method}

We develop \METHOD{}, a supervised multimodal learning framework that integrates representation learning with structured probabilistic modeling. Each modality is encoded via a source-specific pathway and transformed by an invertible normalizing flow. The resulting representations are jointly modeled with the target using a structured latent variable model that explicitly disentangles shared predictive, modality-specific predictive, and target-irrelevant variation. This joint model provides likelihood-based generative guidance during training. While the framework naturally extends to an arbitrary number of modalities (by incorporating additional source pathways and block-structured LVM components), we present the formulation for two sources $x^{(1)}$ and $x^{(2)}$ with target $y$ for clarity.

\subsection{Structured Latent Variable Modeling}
\label{sec:structured_latent_modeling}

Consider paired observations
$\mathcal D=\{(x_i^{(1)},x_i^{(2)},y_i)\}_{i=1}^{N}$.
Each source $x_i^{(m)}$ is mapped to a continuous representation through a source-specific encoder $g_{\theta_m}$ followed by an invertible flow  $f_{\phi_m}$:
\begin{equation}
    h^{(m)}=g_{\theta_m}(x^{(m)}) \in\mathbb R^{d_m},
    \qquad
    u^{(m)}=f_{\phi_m}(h^{(m)})\in\mathbb R^{d_m},
    \quad m\in\{1,2\},
    \label{eq:method_source_maps}
\end{equation}
and these representations do not depend on the target $y$.
We also extract continuous embedding $\tilde{y} \in\mathbb R^{d_y}$ of the target $y$ if it is categorical. 

Our framework, the Supervised Structured Multimodal Latent Variable Model (\METHOD{}), defines a joint generative process over the concatenated features $w=[u^{(1)};u^{(2)};\tilde y] \in \mathbb R^{d}$, where $d=d_1+d_2+d_y$. We model $w$ using four independent latent factors $z=[z^{12};z^1;z^2;z^c]$ of dimension $K=k_{12}+k_1+k_2+k_c$. The generative model $w = Az + \epsilon$ follows the structured linear mapping:
\begin{equation}
    \begin{bmatrix}
        u^{(1)} \\
        u^{(2)} \\
        \tilde y
    \end{bmatrix}
    =
    \begin{bmatrix}
        \Lambda_1^{12} & \Lambda_1^1 & 0 & \Lambda_1^c\\
        \Lambda_2^{12} & 0 & \Lambda_2^2 & \Lambda_2^c\\
        B^{12} & B^1 & B^2 & 0
    \end{bmatrix}
    \begin{bmatrix}
        z^{12} \\
        z^1 \\
        z^2 \\
        z^c
    \end{bmatrix}
    +
    \begin{bmatrix}
        \epsilon_1 \\
        \epsilon_2 \\
        \epsilon_y
    \end{bmatrix}, \qquad z\sim\mathcal N(0,I_K),
    \label{eq:method_fssfa_loadings}
\end{equation}
where $\Lambda_m^{12}$, $\Lambda_m^{m}$, and $\Lambda_m^c$ represent the shared predictive, modality-specific predictive, and target-irrelevant components, respectively, for modality $m$. Loading matrices $B^{12}$ and $B^m$ characterize the effects of shared and modality $m$-specific latent variables on $\tilde y$. The observation noise $\epsilon = [\epsilon_1; \epsilon_2; \epsilon_y]$ is drawn from $\mathcal N(0,\Psi)$, with a positive diagonal covariance $\Psi=\operatorname{diag}(\Psi_1,\Psi_2,\Psi_y)$.

This block-structured loading pattern explicitly disentangles the latent space into distinct semantic components. The factor $z^{12}$ captures shared predictive information that influences both sources and the target, while $z^1$ and $z^2$ isolate unique predictive information specific to each source. Crucially, the target $\tilde y$ does not depend on the shared nuisance factor $z^c$. This structure ensures that task-irrelevant cross-modal correlations are absorbed by $z^c$ rather than confounding the predictive representations. Marginalizing over the latent factors gives
\begin{equation}
    w\sim\mathcal N(0,\Omega),
    \qquad
    \Omega=AA^\top+\Psi.
    \label{eq:method_fssfa_covariance}
\end{equation}
This formulation provides a unified perspective on multi-view latent variable models. In the unsupervised setting (where the target $y$ is absent), it is impossible to distinguish predictive shared variation ($z^{12}$) from nuisance shared variation ($z^c$), so they collapse into a single shared factor. Under this restriction, the model reduces to probabilistic Canonical Correlation Analysis (CCA) \citep{BachJordan05a} if we retain only this shared factor, and to Joint and Individual Variation Explained (JIVE) \citep{lock2013joint} if we also include the unique factors $z^1$ and $z^2$. Conversely, the full supervised structure generalizes recent predictive LVMs like supervised JIVE \citep{palzer2022sjive}, which lack the nuisance factor $z^c$. By introducing $z^c$, \METHOD{} explicitly separates shared predictive variation from task-irrelevant cross-modal correlations. 

\subsection{Flow-Based Distribution Modeling}
\label{sec:flow_distribution_modeling}

To apply the Gaussian \METHOD{} to complex multimodal data, we must bridge the gap between the structured Gaussian base distribution and the highly non-Gaussian neural representations $h^{(m)}$. We achieve this using source-wise normalizing flows $f_{\phi_m}$, which provide invertible, dimension-preserving transformations with tractable Jacobian determinants \citep{dinh2017density,papamakarios2021normalizing}. Although each flow processes a single modality independently, their outputs $u^{(m)}$ are modeled jointly with the target $\tilde y$ under the \METHOD{} base distribution to retain all source--source and source--target dependencies. This unified construction enables a joint maximum likelihood framework: we can perform maximum likelihood estimation of the structured loading matrix $A$ and residual covariance $\Psi$ while simultaneously learning the flow parameters $\phi = (\phi_1, \phi_2)$ to map the initial encoder outputs into Gaussian-distributed variables.

For fixed initial encoders, the joint log-density of this feature-space model is given by
\begin{equation}
\begin{aligned}
    \log p_{\phi,A,\Psi}(h^{(1)},h^{(2)},\tilde y)
    ={}&\log\mathcal N(w;0,\Omega) +\sum_{m=1}^{2}
       \log\left|\det J_{f_{\phi_m}}(h^{(m)})\right|,
\end{aligned}
\label{eq:method_feature_density}
\end{equation}
where $\Omega = AA^\top + \Psi$. The corresponding fitting objective for an observation is the negative log-likelihood, $\mathcal L_{\mathrm{density}} = -\log p_{\phi,A,\Psi}(h^{(1)},h^{(2)},\tilde y)$. The Gaussian term fits the structured joint relationships, while the Jacobian terms account for volume changes induced by the flows. 

While finite-capacity flows practically optimize the Kullback-Leibler divergence between the empirical distribution and the Gaussian LVM, normalizing flows theoretically possess universal approximation capabilities for continuous probability distributions \citep{papamakarios2021normalizing}. Because our flows operate independently on each modality, this provides a strong asymptotic guarantee that, given sufficient capacity, they can perfectly transform the source representations to match the Gaussian marginals required by the joint LVM.

\subsection{Auxiliary Representation Learning}
\label{sec:auxiliary_representation_learning}

Because the initial encoders perform dimension reduction on the raw inputs, an auxiliary objective is necessary to prevent representation degeneracy before the representations enter the structured LVM. Although the subsequent normalizing flows are invertible, this auxiliary loss ensures the encoders extract and retain sufficient information. The objective $\mathcal L_{\mathrm{rep}}$ is computed directly on the encoder representations $h^{(m)}$ and is instantiated by either masked reconstruction ($\mathcal L_{\mathrm{rec}}$) or contrastive learning ($\mathcal L_{\mathrm{con}}$). Only one of these alternatives is used in each variant; they are not added together.

\paragraph{Masked reconstruction.}
Following recent masked representation learning methods \citep{he2022masked,baevski2022data2vec}, let $H^{(m)}$ denote clean features at an intermediate layer, $\widehat H^{(m)}$ the reconstructed features, and $M_m$ the set of masked indices for modality $m$. We define
\begin{equation}
    \mathcal L_{\mathrm{rec}}
    =\frac{1}{2}\sum_{m=1}^{2}
    \ell\!\left(
        \widehat H^{(m)}[M_m],
        \operatorname{sg}(H^{(m)}[M_m])
    \right),
    \label{eq:method_reconstruction_loss}
\end{equation}
where $\ell$ is a distance metric (e.g., Smooth L1 loss) averaged over the masked entries, and $\operatorname{sg}$ stops gradients through the target argument. The objective encourages recovery of source features from partial observations.

\paragraph{Contrastive learning.}
Let $v$ and $v^+$ be normalized projections of the representations $h^{(m)}$ extracted from two independently augmented versions of the same modality. The InfoNCE objective \citep{oord2018representation,chen2020simple} for this observation is:
\begin{equation}
    \mathcal L_{\mathrm{con}}
    =-\log\frac{\exp(v^\top v^+ / \tau)}{\exp(v^\top v^+ / \tau) + \sum_{v^- \in \mathcal N(v)}\exp(v^\top v^- / \tau)},
    \label{eq:method_contrastive_loss}
\end{equation}
where $\tau>0$ is the temperature, and $\mathcal N(v)$ contains negative samples from other observations. In practice, this loss is symmetrized across both anchor directions. Crucially, positive pairs are formed from the same joint observation rather than across modalities, leaving the extraction of cross-modal shared information to the \METHOD{}.

\subsection{Total Representation Learning Objective}
\label{sec:multimodal_representation_learning}

Because our framework explicitly isolates predictive components, we construct the final representation by computing and concatenating the posterior means of the predictive latent factors ($z^{12}, z^1$, and $z^2$) given $(u^{(1)}, u^{(2)})$ from the \METHOD{}.
To provide direct discriminative supervision and accommodate standard downstream evaluations, this representation is passed to a multilayer perceptron (MLP) optimized with a standard task loss $\mathcal L_{\mathrm{MLP}}$ (e.g., cross-entropy for classification or mean squared error for regression) against the ground-truth targets.

The overall training objective minimizes a weighted combination of this discriminative task loss, the generative Flow--\METHOD{} maximum-likelihood loss ($\mathcal L_{\mathrm{density}}$) defined in Section~\ref{sec:flow_distribution_modeling}, and the auxiliary representation loss ($\mathcal L_{\mathrm{rep}}$) defined in Section~\ref{sec:auxiliary_representation_learning}:
\begin{equation}
    \mathcal L
    =\mathcal L_{\mathrm{MLP}}
     +\lambda_{\mathrm{density}}\mathcal L_{\mathrm{density}}
     +\lambda_{\mathrm{rep}}\mathcal L_{\mathrm{rep}}.
    \label{eq:method_total_loss}
\end{equation}

\paragraph{Optimization.}
In practice, the total loss $\mathcal L$ is averaged over the training set and minimized using stochastic gradient descent (SGD). All parameters---including source encoders, normalizing flows, structured covariance parameters, and the final MLP classifier---are optimized jointly end-to-end.

\section{Connections to other multimodal learning frameworks}
\label{sec:connection}

To understand how our model captures complex multimodal interactions, we analyze it through the lens of Partial Information Decomposition (PID)~\citep{Williams2010Nonnegative,liang2024foundations}, a framework that decomposes joint predictive information into unique, redundant, and synergistic components. Our \METHOD{} connects deeply to this formalism. PID achieves this decomposition by defining a constraint space $\Delta_P$ of adversarial joint distributions $Q$ that preserve the true source-target marginals:
\begin{equation}
    \Delta_P = \{ Q \mid Q(u^{(1)}, \tilde y) = P(u^{(1)}, \tilde y), \ Q(u^{(2)}, \tilde y) = P(u^{(2)}, \tilde y) \}.
\end{equation}
While the latent factors in our model ($z^{12}, z^1, z^2, z^c$) possess clear intuitive meanings, they do not map one-to-one onto the information-theoretic atoms of PID. Instead, as we detail in Appendix~\ref{sec:interpretation}, the formal PID definition extracts synergy by identifying a worst-case adversary $q^*_{MI}$ that minimizes the joint mutual information:
\begin{equation}
    q^*_{MI} = \arg\min_{Q \in \Delta_P} I_Q(u^{(1)}, u^{(2)}; \tilde y).
\end{equation}
We show that this adversary is remarkably strong: it correlates the private noise across modalities to align with the combined task signal, maximally reducing the joint predictive information.

Furthermore, we demonstrate that the assumption of conditional independence ($u^{(1)} \perp\!\!\!\perp u^{(2)} \mid \tilde y$)---a widespread premise in multimodal machine learning~\citep{blum1998combining,Chaudh_09a}---provides a tractable analytic proxy $q^*_{CE}$ for the intractable PID optimization. This adversary is defined by maximizing the conditional entropy:
\begin{equation}
    q^*_{CE} = \arg\max_{Q \in \Delta_P} H_Q(u^{(1)}, u^{(2)} \mid \tilde y).
\end{equation}
Because our generative structure models the latent features as jointly Gaussian, their information quantities are fully determined by the joint covariance matrix. Consequently, the constraint $Q \in \Delta_P$, which fixes the source-target marginals, restricts the adversary to exclusively manipulating the cross-source covariance block. This means the adversarial covariance matrix $\Sig^{(Q)}$ is identically the original covariance matrix $\Sig$ plus an off-diagonal cross-source perturbation block $C$. As detailed in Appendix~\ref{sec:interpretation}, the optimal conditional entropy adversary acts as a decoupling filter by injecting the exact negative conditional cross-covariance $C_{CE} = -\Sig_{u^{(1)} u^{(2)} \mid \tilde y}$ as this perturbation. Under our \METHOD{} parameterization, this injected covariance takes a closed form:
\begin{equation}
    C_{CE} = -\Big( \underbrace{\Lambda_1^c (\Lambda_2^c)^\top}_{\text{shared nuisance}} + \underbrace{\Lambda_1^{12} (\Lambda_2^{12})^\top}_{\text{shared signal}} - \underbrace{\Sig_{u^{(1)} \tilde y} \Sig_{\tilde y \tilde y}^{-1} \Sig_{\tilde y u^{(2)}}}_{\text{induced V-structure}} \Big).
\end{equation}
This term-by-term algebraic decomposition reveals that \emph{all} latent factors provide opportunities for synergistic predictive power. To eliminate synergy and achieve conditional independence, the adversary $C_{CE}$ must systematically inject noise to cancel three distinct mechanisms: the cooperative suppression of the shared nuisance $z^c$, the noise-averaging of the redundant shared signal $z^{12}$, and the V-structure ``explaining-away'' correlation induced by the independent private signals $z^1$ and $z^2$. 

Because $q^*_{CE}$ provides a closed-form but sub-optimal minimization of the joint mutual information ($I_{q^*_{MI}} \le I_{q^*_{CE}}$), substituting it into the PID equations yields analytic bounds on the true optimal quantities. Specifically, our tractable proxy safely underestimates both Synergy ($S_{CE} \le S_{MI}$) and Redundancy ($R_{CE} \le R_{MI}$), while overestimating the Unique Information ($U_j^{CE} \ge U_j^{MI}$). 
Finally, our structured model explicitly accommodates the spectrum of multimodal assumptions. Many classical multi-view frameworks rely on the \emph{multi-view redundancy assumption}~\citep{federici2020learning,tsai2021selfsupervised,tosh2021contrastive}, positing that all task-relevant information is shared ($z^1, z^2$ are empty). By explicitly isolating the private predictive factors ($z^1, z^2$) from the shared redundancy ($z^{12}$), \METHOD{} provides a generalization that efficiently learns in both highly redundant and highly unique multimodal settings without discarding synergistic interactions.

\section{Related Work}
\label{sec:related_work}
\paragraph{Unsupervised multimodal representation learning.}
Multimodal representation learning traditionally relies on correspondence and partial observations to retain information across sources. Contrastive learning and mutual information-based methods~\citep{oord2018representation,belghazi2018mutual,chen2020simple,tian2020contrastive,radford2021learning} learn transferable representations by aligning paired views. The Multi-View Information Bottleneck~\citep{federici2020learning} formalizes this by providing an information-theoretic account for retaining shared predictive content under strict multi-view redundancy assumptions.
To move beyond purely shared information, recent works explicitly isolate modality-specific or complementary content. Deep generative models~\citep{wang2016vcca,lee2021private,palumbo2023mmvaeplus,zhang2026disentanglement} separate shared and private variations through structured likelihood modeling. Parallel efforts in self-supervised learning, such as FactorCL~\citep{liang2023factorized} and DisentangledSSL~\citep{wang2025information}, achieve this factorization using complex augmentations and contrastive learning. Other approaches actively encourage complementary interactions: CoMM~\citep{dufumier2025what} models joint spaces to capture beyond-redundancy interactions, while InfMasking~\citep{wen2026infmasking} contrasts partially masked features against complete fusions to elicit synergy. While these self-supervised methods demonstrate the necessity of learning from more than just cross-modal agreement, they largely rely on complex ad-hoc regularizations or masking strategies.
We generalize this principle of likelihood-based separation to the supervised setting, providing a rigorous statistical formalization for complex source--target dependencies.

\paragraph{Balancing supervised multimodal optimization.}
In joint multimodal training, models often disproportionately rely on a single dominant modality~\citep{peng2022balanced, huang2022modality}, a phenomenon formally linked to inter-modality correlations~\citep{zhang2024understanding}. To mitigate this bias, a prominent line of work directly intervenes in the optimization process. Techniques like OGM~\citep{peng2022balanced}, AGM~\citep{li2023boosting}, and MLB~\citep{kontras2024improving} dynamically scale gradient updates based on ongoing modality contributions, while scheduling methods like MMPareto~\citep{wei2024mmpareto}, MLA~\citep{zhang2024multimodal}, and ReconBoost~\citep{hua2024reconboost} explicitly reconcile conflicting unimodal and multimodal objectives. 
More recently, methods use game-theoretic valuation~\citep{wei2024enhancing} or mutual-information decomposition~\citep{kontras2025balancing} to formalize modality cooperation. 
We share the motivation of using task-related dependence to guide learning; however, \METHOD{} achieves this balance inherently via its generative objective, bypassing the need for explicit gradient modulation.

\paragraph{Target-relative information decomposition.}
Partial information decomposition (PID) isolates the unique, redundant, and synergistic information provided by multiple sources about a target~\citep{Williams2010Nonnegative,bertschinger2014quantifying}. Recent works have extended these concepts to multimodal learning, developing scalable estimators for quantifying interactions~\citep{liang2023quantifying} and deriving learning guarantee
~\citep{liang2024multimodal}. While these works focus on estimating information-theoretic quantities from datasets or fixed representations, evaluating PID on continuous, high-dimensional neural representations presents significant computational challenges. Existing continuous estimators rely on variational optimization~\citep{pakman2021estimating}, neural estimation~\citep{kleinman2021redundant}, convex approximations for Gaussians~\citep{venkatesh2022partial,venkatesh2023gaussian}, or invertible transformations to latent Gaussian spaces~\citep{zhao2025partial}. We build on this latent-Gaussian principle rather than proposing a new PID definition. 
By explicitly disentangling shared and modality-specific predictive factors, our framework naturally yields the covariance structures needed to analyze synergistic information.

\section{Experiments}
\label{sec:experiments}

We adopt the experimental setups, including datasets and base architectures, of recent related works. For all experiments, we tune hyperparameters and perform ablation studies exclusively on the validation set. We select the best-performing model configuration on the validation set and evaluate its task MLP predictions to report the final metrics on the held-out test set. We report accuracy for classification tasks and mean squared error (MSE) for regression tasks. 
Our repeated-run results use 5 seeds 
and are summarized by the mean and sample standard deviation.
We focus on comparing with the current state-of-the-art unsupervised (e.g., COMM~\citep{dufumier2025what}, InfMasking~\citep{wen2026infmasking}) and supervised (MCR~\citep{kontras2025balancing}) methods using their configurations (data, architecture, and implementation); results taken from these prior works are annotated with $^*$.

\subsection{Subspace Recovery on Synthetic Data}
\label{sec:synthetic_subspace_recovery}

We first verify that maximum likelihood modeling via SGD recovers the true task-relevant latent structure. We generate two observed sources and the response from independent Gaussian latent blocks ($z^{12}$, $z^1$, $z^2$, and $z^c$) following the \METHOD{} loading pattern in~\eqref{eq:method_fssfa_loadings}. 
We use isotropic observation noise with covariance $\Psi=\sigma^2 I$.
we then apply fixed random nonlinear flow transformations to both observed sources (without the nonlinear mappings, the recovery would be perfect across settings). 
For the experiments, we first vary the ambient source dimension over $d_1=d_2\in\{12,24,48,96\}$ while fixing $k=k_{12}=k_1=k_2=k_c=4$ and $\sigma=0.4$. Second, we vary the observation-noise standard deviation over $\sigma\in\{0.2,0.4,0.6,0.8\}$ while fixing $d_1=d_2=24$ and $k=4$. Then, we vary the dimensionality of each latent block over $k\in\{4,8,12,16\}$ while fixing $d_1=d_2=48$ and $\sigma=0.4$. 
Models are trained on $10{,}000$ samples for $12{,}000$ steps using SGD (momentum $0.9$, batch size $512$, learning rate $3\times10^{-3}$). The checkpoint achieving the lowest negative log-likelihood on a $2{,}000$-sample validation set is evaluated on $4{,}096$ held-out test samples. Results are averaged over five random seeds.

\begin{table}[t]
    \centering
    \caption{
        Recovery of \METHOD{} subspaces. Here $dim=d_1=d_2$ and $k=k_{12}=k_1=k_2=k_c$.}
    \label{tab:synthetic_recovery}
    \begin{tabular}{@{}llc@{}}
        \toprule
        \multicolumn{2}{c}{Setting}
        & Canonical correlation (CC)  $\uparrow$ \\
        \midrule

        \multirow{4}{*}{$k=4$, $\sigma=0.4$}
        & $dim=12$ & $0.99479\pm0.00315$ \\
        & $dim=24$ & $0.98816\pm0.00396$ \\
        & $dim=48$ & $0.94867\pm0.03073$ \\
        & $dim=96$ & $0.93104\pm0.03477$ \\
        \hline 
        
        \multirow{4}{*}{$dim=24$, $k=4$}
        & $\sigma=0.2$ & $0.98624\pm0.00523$ \\
        & $\sigma=0.4$ & $0.98816\pm0.00396$ \\
        & $\sigma=0.6$ & $0.99109\pm0.00206$ \\
        & $\sigma=0.8$ & $0.99137\pm0.00155$ \\
        \hline

        $dim=48$, $\sigma=0.4$
        &
        $k=4$
        & $0.94867 \pm 0.03073$ \\
        &
        $k=8$
        & $0.96488 \pm 0.01118$ \\
        &
        $k=12$
        & $0.97333 \pm 0.00559$ \\
        &
        $k=16$
        & $0.96502 \pm 0.00804$ \\

        \bottomrule
    \end{tabular}
\end{table}

Because individual loading coordinates are only identifiable up to rotations within each latent block, we evaluate recovery at the subspace level using a rotation-invariant metric:
\begin{equation}
\mathrm{CC} = \frac{1}{k_{12}+k_{1}+k_{2}}\sum_{i=1}^{k_{12}+k_{1}+k_{2}} \sigma_i\!\left(Q^\top\widehat{Q}\right).
\end{equation}
Here, $Q$ and $\widehat{Q}$ denote orthonormal bases for the true and estimated task-relevant subspaces, respectively, while $\sigma_i$ indicates the $i$-th singular value. Mean canonical correlation ($\mathrm{CC}$) measures subspace alignment through their principal angles (larger is better), and is bounded in $[0,1]$.
Table~\ref{tab:synthetic_recovery} demonstrates that \METHOD{} consistently recovers the task-relevant subspace under varying conditions. While performance degrades smoothly as the estimation problem becomes more challenging---such as with higher ambient or latent dimensions---the recovered predictive geometry remains robust, maintaining high canonical correlation across all settings.

\subsection{Experiments on TriFeature}
\label{sec:trifeature}

We use TriFeatures~\citep{dufumier2025what} as a controlled diagnostic to examine whether the latent variables learned by our model exhibit the intended functional specialization. By providing explicit control over target labels, TriFeatures enables us to systematically evaluate the extraction of shared, source-specific, and cross-source information.  Each input image comprises three categorical attributes—shape, texture, and color—which are used to construct four targeted diagnostic tasks. Redundancy ($R$) targets the shape attribute shared across both images. The two unique tasks, $U_{1}$ and $U_{2}$, correspond to the independent texture attributes of the first and second images, respectively. These three attribute-level tasks are evaluated as 10-way classification problems. Finally, Synergy ($S$) is a binary classification task evaluating a cross-source relation determined jointly by the texture of the first image and the color of the second. Crucially, neither attribute alone is sufficient to predict the target; the synergistic relation can only be resolved by integrating information contributed by both sources.
Following InfMasking \citep{wen2026infmasking}, we train on $10{,}000$ synthetic image pairs and evaluate on $4{,}096$ held-out pairs.
For the encoders of both \Mask{} and \Contrast{}, each view uses an AlexNet frontend and an independent width-512 Transformer with its own CLS token. We train both architectures for $100$ epochs. The representation dimension is $512$ for $u^{(1)}$, $u^{(2)}$, and $\tilde{y}$. The latent dimensions are $k_{12}=k_{1}=k_{2}=k_c=8$.

\begin{table}[t]
    \centering
    \caption{Classification accuracy (\%) on TriFeature, for different task labels $R$/$U_1$/$U_2$/$S$.}
    \label{tab:trifeature_main}
    \begin{tabular}{@{}lccccc@{}}
        \toprule
Method & $R$-ACC $\uparrow$ & $U_1$-ACC $\uparrow$ & $U_2$-ACC $\uparrow$ & $U$-ACC $\uparrow$ & $S$-ACC $\uparrow$ \\
        \midrule
FactorCL 
    & $99.8^{*}$ &  & & $62.5^{*}$ & $46.5^{*}$ \\
CoMM 
    & $99.9\pm 0.1^{*}$ & $84.4 \pm 2.4^{*}$ & $91.2 \pm 1.0^{*}$ & $86.8\pm 3.0^{*}$  & $71.4\pm 3.5^{*}$ \\
InfMasking 
    & $99.9\pm 0.1^{*}$ & $90.7\pm 2.1^{*}$ & $91.4\pm 3.0^{*}$ & $90.6\pm 2.3^{*}$& $77.0\pm 4.2^{*}$ \\
MCR 
& $99.9\pm0.1$ & $92.1\pm1.4$ & $93.2\pm0.9$ & $92.7\pm1.2$ & $90.2\pm3.9$ \\
\midrule
\Mask{}
    & $93.4 \pm 1.1$
    & $91.3 \pm 2.2$
    & $84.9 \pm 3.5$
    & $88.1 \pm 2.7$
    & $85.4 \pm 4.3$ \\
\Contrast{}
    & $\mathbf{99.9} \pm \mathbf{0.1}$
    & $\mathbf{95.1} \pm \mathbf{0.7}$
    & $\mathbf{95.3} \pm \mathbf{0.8}$
    & $\mathbf{95.2} \pm \mathbf{0.5}$
    & $\mathbf{92.3} \pm \mathbf{0.9}$ \\
    $\quad  z^{12}$
            & $93.6 \pm 3.3$ & $34.2 \pm 7.1$ & $32.7 \pm 11.8$ & $33.5 \pm9.5$  & $56.2 \pm2.6$  \\
    $\quad  z^1$
            & $70.4 \pm 4.3$ & $83.5 \pm 9.2$ & $14.1 \pm 7.5$ & $48.8 \pm 8.4$ & $48.6 \pm 1.6$  \\
    $\quad  z^2$
            & $64.3 \pm 9.5$ & $10.6 \pm 5.2$ & $78.4 \pm 14.0$  & $44.5 \pm9.6$ & $49.8 \pm 1.7$  \\
\bottomrule
\end{tabular}
\end{table}

We provide the classification accuracy on test set of several methods in Table~\ref{tab:trifeature_main}. In general, supervised methods (MCR and \METHOD{}) outperform unsupervised methods (e.g., COMM and InfMasking), due to joint training of predictors and predictive representations, and contrastive learning turns out to be a better representation learning objective for this dataset.
To see that our latent factors well extracts the desired predictive information, after model training (with task loss on $(z^{12}$, $z^{1}, z^{2})$), we train a small MLP on each extracted latent factor by \Contrast{} and obtain 3 additional accuracies. As expected, For the tasks $R$, $U_1$ and $U_2$, performing classification only on $z^{12}$, $z^{1}, z^{2}$ correspondingly maintains most accuracy. For the synergy task $S$ however, each individual factor is not sufficient to maintain desired predictive information. 

\subsection{Real-World Multimodal Benchmarks}
\label{sec:multibench}

\paragraph{Datasets and tasks.}
We evaluate two groups of real-world benchmarks. The first follows the InfMasking setup~\citep{wen2026infmasking} on MultiBench~\citep{liang2021multibench}, covering Vision\&Touch end-effector (V\&T EE) prediction, MIMIC diagnostic-group prediction, MOSI sentiment classification, UR-FUNNY humor recognition, and MUSTARD sarcasm recognition. The second follows the MCR configurations for CREMA-D, UCF101, and MOSEI~\citep{kontras2025balancing}. Complete dataset definitions and evaluation conventions are provided in Appendix~\ref{sec:dataset}; dataset-specific model architetures are summarized in Appendix~\ref{sec:expts-architecture}.

\begin{table*}[t]
    \centering
    \caption{
    Test results on two modality benchmarks, by unsupervised, supervised, and our methods.}
    \label{tab:realworld_unified}
    \begin{tabular}{@{}l@{\hspace{0.02\linewidth}}c@{\hspace{0.02\linewidth}}c@{\hspace{0.02\linewidth}}c@{\hspace{0.02\linewidth}}c@{\hspace{0.02\linewidth}}c@{\hspace{0.02\linewidth}}c@{}}
        \toprule

        Method
        & V\&T EE $\downarrow$
        & MIMIC $\uparrow$
        & CREMA-D $\uparrow$
        & UCF101 $\uparrow$
        & MOSEI $\uparrow$
        \\
        \midrule

        FactorCL 
        & $10.8\pm0.6^{*}$
        & $67.3\pm0.0^{*}$
        & $61.1 \pm1.3$
        & $50.3 \pm2.2$
        & $78.5 \pm 0.1$
        \\

        CoMM 
        & $8.0\pm2.1^{*}$
        & $66.4\pm0.4^{*}$
        & $59.9 \pm 2.5$
        & $51.0 \pm 2.1$
        & $79.7 \pm 0.3$
        \\

        InfMasking 
        & $4.2\pm0.4^{*}$
        & $68.1\pm0.4^{*}$
        & $63.9 \pm 2.7$
        & $51.3 \pm 1.9$
        & $80.4 \pm 0.4$
        \\

\hline 
\multirow{2}{*}{Unimodals}
& $\mathrm{I}:1.9\pm0.0$
& $\mathrm{S}:56.3\pm0.5$
        & $\mathrm{V}:55.6\pm3.4$
        & $\mathrm{V}:42.7\pm1.2$
        & $\mathrm{V}:65.2\pm0.2$
        \\
& $\mathrm{F}:87.7\pm0.5$
& $\mathrm{T}:68.0\pm0.4$
        & $\mathrm{A}:56.5\pm3.7$
        & $\mathrm{A}:28.8\pm1.7$
        & $\mathrm{T}:78.4\pm1.3$
        \\

Joint Training
& $1.9\pm0.1$
 & $66.8\pm1.4$
            & $62.6\pm5.8^{*}$
        & $47.7\pm1.5^{*}$
        & $80.5\pm0.2^{*}$
        \\

        MCR 
        & $2.0 \pm 0.18$
        & $68.6 \pm 0.8$
        & $76.1\pm1.6^{*}$
        & $55.2\pm1.8^{*}$
        & $\mathbf{80.8\pm0.4}^{*}$
        \\

        \midrule
        \METHOD \\
        \MaskShort{}
        & $\mathbf{1.6 \pm 0.1}$
        & $\mathbf{69.1 \pm 0.4}$
        & $74.7\pm 1.7$
        & $57.0 \pm 1.8$
        & $80.5 \pm 0.8$
        \\

        \ContrastShort{}
        & $2.6 \pm 0.2$
        & $68.7 \pm 0.5$
        & $\mathbf{76.2 \pm 2.0}$
        & $\mathbf{57.2 \pm 2.0}$
        & $80.3 \pm 0.6$
        \\

        \bottomrule
    \end{tabular}
\end{table*}

\begin{table*}[t]
    \begin{minipage}[t]{0.59\linewidth}
        \centering
        \caption{
        Test results on two modality benchmarks MOSI (MO), UR-FUNNY (UR), MUSTARD (MU), by unsupervised, supervised, and our methods.}
        \label{tab:realworld_unified_2}
        \begin{tabular}{@{}l@{\hspace{0.03\linewidth}}c@{\hspace{0.03\linewidth}}c@{\hspace{0.03\linewidth}}c@{}}
            \toprule
    
            Method
            & MO $\uparrow$
            & UR $\uparrow$
            & MU $\uparrow$
            \\
            \midrule
    
            FactorCL 
            & $51.2\pm1.6^{*}$
            & $60.5\pm0.8^{*}$
            & $55.8\pm0.9^{*}$
            \\

            CoMM 
            & $63.7\pm2.5^{*}$
            & $63.3\pm0.5^{*}$
            & $64.4\pm1.1^{*}$
            \\
    
            InfMasking 
            & $69.0\pm1.2^{*}$
            & $64.3\pm0.9^{*}$
            & $66.8\pm2.5^{*}$
            \\
    
    \hline 
    \multirow{2}{*}{Unimodals}
    & $\mathrm{V}:55.1\pm1.3$
    & $52.4\pm0.5$
    & $57.7\pm2.2$
            \\
    & $\mathrm{T}:72.0\pm1.4$
    & $61.7\pm0.7$
    & $63.4\pm2.0$
            \\
    
    Joint Training
    & $72.9\pm1.6$
    & $61.7\pm1.0$
    & $60.6\pm2.0$
            \\

            MCR 
            & $75.0 \pm 1.2$
            & $54.4 \pm 2.1$
            & $60.5 \pm 3.5$
            \\
    
            \midrule
            \METHOD \\
            \MaskShort{}{}
            & $\mathbf{75.8 \pm 0.8}$
            & $63.7 \pm2.0$
            & $\mathbf{68.6 \pm 5.5}$
            \\
    
            \ContrastShort{}
            & $74.3 \pm 1.9$
            & $\mathbf{65.0 \pm 1.8}$
            & $66.0 \pm 2.9$
            \\
    
            \bottomrule
        \end{tabular}
    \end{minipage}\hfill
    \begin{minipage}[t]{0.37\linewidth}
        \centering
        \caption{Architecture ablation on MOSI (MO), UR-FUNNY (UR), MUSTARD (MU) validation sets.}
        \label{tab:trifeature_ablation}
        \begin{tabular}{@{}l@{\hspace{0.04\linewidth}}c@{\hspace{0.03\linewidth}}c@{\hspace{0.03\linewidth}}c@{}}
            \toprule
            \METHOD{}
                & MO  $\uparrow$
                & UR $\uparrow$
                & MU  $\uparrow$ \\
            \midrule
            \MaskShort{}
                & 73.6 &  63.8 & 63.8    \\
            \quad w/o Flow
                &  68.7 & 59.6 & 48.6    \\
            \quad  w/o $z^c$
                & 64.6 &  62.0 &  57.3    \\
            \midrule
            \ContrastShort{}
                & 76.2 & 63.6 & 59.4     \\
            \quad  w/o Flow
                & 70.2 & 61.4 & 55.8    \\
            \quad  w/o $z^c$
                & 72.9 & 58.9 & 52.9     \\
            \bottomrule
        \end{tabular}
    \end{minipage}
\end{table*}

\begin{table}[t]
    \centering
    \caption{Test accuracies (\%) on three-modality benchmarks.}
    \label{tab:multibench_3mod_main}
    \begin{tabular}{lccc}
        \toprule
        Method
            & UR-FUNNY $\uparrow$
            & V\&T Contact $\uparrow$
            & MOSEI $\uparrow$ \\
        \midrule

        CoMM
            & $64.8\pm1.1^{*}$
            & $94.1\pm0.2^{*}$
            & $79.0 \pm 0.3$ \\

        InfMasking
            & $65.6\pm1.2^{*}$
            & $94.1\pm0.1^{*}$
            & $78.9 \pm 0.8$ \\

        MCR
            & $66.6 \pm 1.5$
            & $94.1 \pm 0.3$
            & $81.1\pm0.4^{*}$ \\

        \midrule

        \Mask{}
            & $66.6 \pm 1.8$
            & $94.0 \pm 0.3$
            & $80.7 \pm 0.7$ \\

        \Contrast{}
            & $66.0 \pm 0.9$
            & $93.6 \pm 0.9$
            & $81.4 \pm 0.5$ \\

        \bottomrule
    \end{tabular}%
\end{table}


\paragraph{Experiments with Two Input Modalities.}
Tables~\ref{tab:realworld_unified} and \ref{tab:realworld_unified_2} summarize the performance on the two-modality benchmarks. We observe that the optimal auxiliary objective varies depending on the dataset: \Mask{} excels on V\&T EE, MIMIC, MOSI, and MUSTARD, whereas \Contrast{} achieves the best performance on UR-FUNNY, CREMA-D, and UCF101. Across all tasks, \METHOD{} consistently outperforms previous unsupervised and supervised methods, indicating better utilization of all decomposed predictive information.
Optimal hyperparameters and sensitivity analysis are discussed in Appendix (cf. 
Table~\ref{tab:hyperparameter_config} 
and Table~\ref{tab:hyperparameter_tuning_full}). 

\paragraph{Ablation studies.} 
We conduct two sets of ablation studies to examine the contributions of the proposed model components and training objectives.
As shown in Table~\ref{tab:trifeature_ablation}, removing the invertible transformations (w/o Flow) or the separate nuisance component (w/o $z^c$) significantly degrades performance across all tasks. 
In the appendix, we ablate the loss terms in Table~\ref{tab:loss_weight_ablation}.
The results confirms that both auxiliary representation objective and likelihood are indispensable for effectively capturing predictive information, validating our structural design.

\paragraph{Experiments with Three Input Modalities.}
We extend our LVM with one more modality, and consider UR-FUNNY with visual, textual, and acoustic features; V\&T contact prediction with image, force, and proprioceptive observations; and MOSEI with visual, textual, and acoustic features.
As shown in Table~\ref{tab:multibench_3mod_main}, \METHOD{} consistently outperforms baselines, demonstrating that it can scale to multiple modalities while preserving strong predictive performance.

\section{Conclusion}
\label{sec:conclusion}

We introduced \METHOD{}, a structured supervised latent variable model that tackles modality competition in multimodal fusion. By disentangling representations into shared predictive, modality-specific predictive, and task-irrelevant nuisance factors, our framework isolates and better utilizes diverse multimodal evidence. Integrating this structured likelihood model with representation learning---via dimension-preserving flows---bridges the gap between classic statistical models and high-dimensional deep networks. Empirically, \METHOD{} mitigates unimodal bias, outperforming strong baselines across diverse real-world benchmarks, and scales seamlessly to more than two modalities. Ultimately, our approach offers a unified latent-variable lens for characterizing complex continuous multimodal interactions. 
Given the inherent complexity of synergy, cleanly extracting all PID components into completely separate latent variables remains an open question. Future work will explore extending our framework to handle partially missing modalities~\citep{wu2018multimodal,lee2021privateshared,lee2021variational}.

\subsection*{AI use statement}

Parts of this manuscript were drafted with assistance from AI.  All AI-assisted content was subsequently reviewed, verified, and edited by the authors.

\subsection*{Reproducibility statement}
We provide the information needed throughout the main paper and appendix. The main paper describes the proposed model, learning objectives, and common experimental protocol. The appendix provides additional details on dataset preprocessing and evaluation protocols, source-specific architectures, hyperparameter configurations, and experimental settings.

\bibliographystyle{iclr2027_conference}
\bibliography{multiview}

\clearpage
\appendix

\section{Appendix}
\label{sec:appendix}

\subsection{Datasets}
\label{sec:dataset}

We evaluate the proposed framework across affective computing, multimodal language understanding, robotics, and audio--visual recognition benchmarks.

\paragraph{MultiBench.}
\textbf{MOSI} contains $2{,}199$ opinion segments paired with visual, acoustic, and textual observations. In the InfMasking-style evaluation, we use the two-modality setting and evaluate the representation with binary sentiment classification. 

\textbf{MOSEI} extends this setting to approximately $23{,}000$ monologue clips and is evaluated under both the two-modality
supervised protocol inherited from MCR and the visual--acoustic--textual three-modality setting. 

\textbf{UR-FUNNY} contains $16{,}514$ examples from $1{,}866$ TED-talk videos and evaluates humor recognition. We use both its two-modality configuration and its visual--acoustic--textual configuration for the three-modality experiments. 

\textbf{MUSTARD} contains $690$ balanced
dialogue utterances and evaluates sarcasm recognition
\citep{liang2021multibench,wen2026infmasking,kontras2025balancing}.

\paragraph{Robotics benchmark.}
\textbf{Vision\&Touch} contains $150$ robotic manipulation trajectories with $1{,}000$ time steps per trajectory. We consider two tasks. End-effector prediction is a regression task evaluated from the two-modality representation,
whereas contact prediction is evaluated as a three-modality classification task using image, force, and proprioceptive observations. We report MSE$\times10^{-4}$ for end-effector prediction and classification accuracy for contact prediction.

\paragraph{Audio--visual benchmarks.}
\textbf{CREMA-D} evaluates six-way emotion recognition from synchronized face video and speech produced by $91$ actors. 

\textbf{UCF101} is used for audio--visual action recognition; following the MCR protocol, we retain the subset of action categories for which both modalities are available. These
benchmarks follow the supervised evaluation protocol used by MCR~\citep{kontras2025balancing}.

\textbf{MIMIC} combines a 24-hour sequence of clinical measurements with static patient descriptors and defines a binary diagnostic-group prediction task. It contains $53{,}423$ admissions from $38{,}597$ patients. We retain MIMIC in
the comparison where required to reproduce the benchmark coverage and averages reported by prior multimodal interaction methods.

\subsection{Architecture}
\label{sec:expts-architecture}

\begin{figure}[t]
    \centering
    \includegraphics[width=\linewidth]{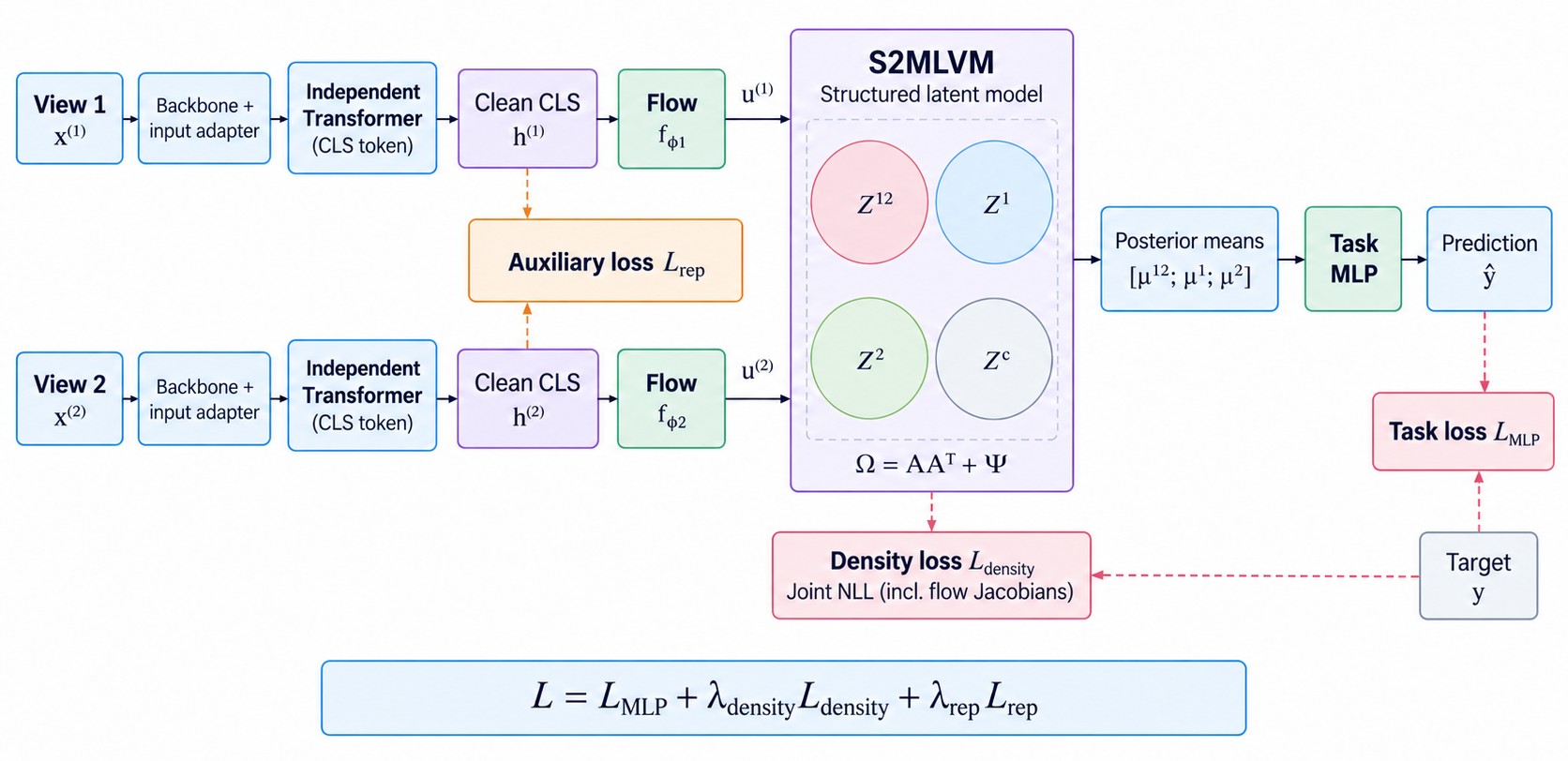}
    \caption{Overview architecture.}
    \label{fig:architecture}
\end{figure}
As illustrated in Figure~\ref{fig:architecture},the overall architecture consists of dataset-specific neural encoders, normalizing flows, and the \METHOD{} latent variable model. 
Each source pathway consists of a dataset-specific backbone and input adapter, followed by an independent Transformer with its own learnable CLS token. The clean, unprojected CLS representations are passed separately to their respective flows, without cross-modal attention or target inputs in the source encoders. 
We evaluate two model variants, \Mask{} and \Contrast{}, which share this identical architecture and differ only in their auxiliary representation loss. To make downstream predictions, the concatenated posterior means of the predictive latent blocks ($z^{12}$, $z^1$, and $z^2$) are extracted from the LVM and passed through a task MLP comprising two linear layers with an intervening ReLU activation, where the supervised task loss is applied.

However, specific details are handled differently for each dataset.
For MOSI, UR-FUNNY, MUSTARD, and MOSEI, we use separate temporal Transformers with modality-specific input projections to encode the precomputed visual and textual feature sequences.
For MIMIC, an MLP encodes the static patient descriptors, while a GRU encodes the multivariate clinical time series.
For Vision\&Touch end-effector regression, we use a ResNet-18 image backbone and a temporal force encoder as separate source pathways. The three-modality contact-prediction configuration additionally includes a dedicated proprioception encoder.
For CREMA-D and UCF101, we adopt the ResNet-18 backbone configuration of MCR~\citep{kontras2025balancing}, with independent visual and acoustic encoders operating on video frames and audio spectrograms, respectively.

Besides, the unimodal and joint-training baselines are reproduced following MCR~\citep{kontras2025balancing}. Unimodal training optimizes each modality independently with the supervised task loss, whereas joint training optimizes all modalities together using a single supervised task objective.

\subsection{Hyperparameter tuning}
\label{hyper-tuning}

The final comparison uses validation-selected hyperparameters, summarized in Table~\ref{tab:hyperparameter_config}. For \Mask{}, these are the mask ratio and batch size; for \Contrast{}, they are the InfoNCE temperature and batch size.
Here, mask ratio $m \in[0,1]$ denotes the fraction of input feature tokens randomly masked for the masked-reconstruction objective, temperature $\tau$ is the temperature parameter that scales the similarity logits before the softmax in the InfoNCE objective, and batch size $B$ denotes the number of training examples in each minibatch. The validation sensitivity of these hyperparameters is reported in Table~\ref{tab:hyperparameter_tuning_full}.

\begin{table}[t]
\centering
\caption{
Selected hyperparameter for each dataset.
}
\label{tab:hyperparameter_config}

\begin{tabular}{lcccccc}
    \toprule
    Dataset
    & \multicolumn{2}{c}{\Mask{}}
    & \multicolumn{2}{c}{\Contrast{}}
    & dim $k$
    & loss  \\
    \cmidrule(lr){2-3}
    \cmidrule(lr){4-5}
    & Mask ratio 
    & Batch size
    & Temperature
    & Batch size
    & 
    & $(\lambda_{\mathrm{density}}, \lambda_{\mathrm{rep}})$ \\
    \midrule

    MOSI
    & $0.55$
    & $96$
    & $0.10$
    & $32$
    & $4$ 
    & (1,1)\\

    MIMIC
    & $0.60$
    & $96$
    & $0.20$
    & $48$
    & $12$
    & (1,1)\\

    UR-FUNNY
    & $0.30$
    & $64$
    & $0.10$
    & $128$
    & $4$
    & (1,1)\\

    MUSTARD
    & $0.45$
    & $64$
    & $0.10$
    & $128$
    & $4$ 
    & (1,1)\\

    CREMA-D
    & $0.55$
    & $32$
    & $0.15$
    & $8$
    & $16$ 
    &(0.2,0.8) \\

    UCF101
    & $0.55$
    & $16$
    & $0.05$
    & $16$
    & $12$ 
    &(0.2,1) \\

    MOSEI
    & $0.85$
    & $64$
    & $0.05$
    & $32$
    & $12$ 
    &(0.5,0.8)\\

    \bottomrule
\end{tabular}
\end{table}

\begin{table*}[t]
    \centering
    \caption{
        Hyperparameter sensitivity on validation sets.
    }
    \label{tab:hyperparameter_tuning_full}
    \begin{tabular}{llcc}
        \toprule
        Study
        & Candidate
        & MOSI $\uparrow$
        & MUSTARD $\uparrow$ \\
        \midrule

        \multirow{8}{*}{\shortstack[l]{Mask ratio\\(\Mask{})}}
        & $m=0.40$ & 77.6 & 65.2 \\
        & $m=0.45$ & 79.1 & \textbf{70.3} \\
        & $m=0.50$ & 77.8 & 66.7 \\
        & $m=0.55$ & \textbf{80.0} & 70.0 \\
        & $m=0.60$ & 75.6 & 62.7 \\
        & $m=0.65$ & 77.9 & 65.2 \\
        & $m=0.70$ & 79.7 & 64.4 \\
        & $m=0.75$ & 74.4 & 63.1 \\

        \addlinespace[4pt]
        \midrule

        \multirow{4}{*}{\shortstack[l]{Temperature\\(\Contrast{})}}
        & $\tau=0.05$  & 77.7 & 64.9 \\
        & $\tau=0.10$  & \textbf{79.9} & \textbf{69.2} \\
        & $\tau=0.15$  & 79.0 & 68.7 \\
        & $\tau=0.20$  & 79.2 & 67.1 \\

        \addlinespace[4pt]
        \midrule

        \multirow{5}{*}{\shortstack[l]{Batch size\\(\Mask{})}}
        & $B=32$  & 75.5 & 68.0 \\
        & $B=48$  & 78.9 & 65.8 \\
        & $B=64$  & 79.1 & \textbf{70.2} \\
        & $B=96$  & \textbf{79.6} & 64.7 \\
        & $B=128$ & 77.0 & 64.6 \\

        \addlinespace[4pt]
        \midrule

        \multirow{5}{*}{\shortstack[l]{Batch size\\(\Contrast{})}}
        & $B=32$  & \textbf{82.7} & 64.9 \\
        & $B=48$  & 80.8 & 68.8 \\
        & $B=64$  & 79.0 & 67.7 \\
        & $B=96$  & 79.0 & 65.9 \\
        & $B=128$ & 78.6 & \textbf{72.2} \\

        \bottomrule
    \end{tabular}
\end{table*}

\subsection{Loss Ablation}
\label{loss-blation}
We use a strict one-factor-at-a-time validation procedure. We conduct an ablation study on the loss coefficients to examine the contribution of each loss term, with the results reported in Table~\ref{tab:loss_weight_ablation}.

\begin{table*}
\centering
\caption{Loss-coefficient ablations on validation datasets.}
\label{tab:loss_weight_ablation}
 \begin{tabular}{lcccc}
        \toprule
        Variant
            & MOSI  $\uparrow$
            & MUSTARD  $\uparrow$ \\
        \midrule
        \Mask{} $(\lambda_{\mathrm{density}}=1, \lambda_{\mathrm{rep}}=1)$
            & 79.0 & 68.6 \\
        $\quad$ w/o $L_{\mathrm{density}}$
            & 64.6 & 63.2 \\
        $\quad$ w/o $L_{\mathrm{rep}}$
            & 73.3  & 66.5 \\
        \midrule
        \Contrast{} $(\lambda_{\mathrm{density}}=1, \lambda_{\mathrm{rep}}=1)$
            & 79.9 & 66.4 \\
        $\quad$ w/o $L_{\mathrm{density}}$
            & 75.4& 64.2 \\
        $\quad$ w/o $L_{\mathrm{rep}}$
            & 77.5& 62.8 \\
        \bottomrule
    \end{tabular}
\end{table*}
\section{Connection to Partial Information Decomposition (PID)}
\label{sec:interpretation}

In this section, we analyze the theoretical origins of predictive synergy within our proposed latent variable model. While the previous sections established the practical benefits of explicitly disentangling predictive and nuisance factors for supervised fusion, we now formalize how this structure captures synergistic interactions between modalities. We utilize the Partial Information Decomposition (PID) framework~\citep{Williams2010Nonnegative,bertschinger2014quantifying} to isolate synergistic information, demonstrating how specific latent factors---such as shared nuisance variables and task-relevant signals---interact to provide predictive power that is only accessible when multiple modalities are observed jointly.

Specifically, PID decomposes the joint and marginal mutual informations into four non-negative components: the unique information provided by each individual view ($\text{Unique}_1$ and $\text{Unique}_2$), the redundant information shared by both views ($\text{Redundancy}$), and the synergistic information that arises only from their combination ($\text{Synergy}$). These components satisfy the classic PID system of equations~\citep{Williams2010Nonnegative}:
\begin{align*}
    I_P(U^1, U^2; Y) &= \text{Unique}_1 + \text{Unique}_2 + \text{Redundancy} + \text{Synergy}, \\
    I_P(U^1; Y) &= \text{Unique}_1 + \text{Redundancy}, \\
    I_P(U^2; Y) &= \text{Unique}_2 + \text{Redundancy}.
\end{align*}
Because this system is underdetermined, modern PID formulations define a constraint space $\Delta_P$ of \emph{adversarial joint distributions} $Q$ that preserve the pairwise source-target marginals of the true distribution $P$:
\begin{equation*}
    \Delta_P = \left\{ Q(U^1, U^2, Y) \;\middle|\; Q(U^1, Y) = P(U^1, Y), \ Q(U^2, Y) = P(U^2, Y) \right\}.
\end{equation*}
By minimizing the joint information over $Q \in \Delta_P$, the framework isolates the synergistic interactions from the baseline predictive information. Following \citet{bertschinger2014quantifying} and \citet{liang2024multimodal}, this adversarial optimization yields formal definitions for the unique and synergistic information:
\begin{align*}
    \text{Unique}_j &= \min_{Q \in \Delta_P} I_Q(U^j; Y \mid U^{-j}), \\
    \text{Synergy} &= I_P(U^1, U^2; Y) - \min_{Q \in \Delta_P} I_Q(U^1, U^2; Y), \\
    \text{Redundancy} &= \max_{Q \in \Delta_P} I_Q(U^1; U^2; Y) = I_P(U^1; Y) + I_P(U^2; Y) - \min_{Q \in \Delta_P} I_Q(U^1, U^2; Y).
\end{align*}

\subsection{The Adversarial Latent Framework}

To provide a self-contained analysis of synergy, we briefly recall the generative structure of our latent variable model. We assume the multi-view observations $U^1, U^2$ and the target $Y$ are generated from a set of mutually orthogonal Gaussian latent factors:\footnote{For notational simplicity in this section, we use uppercase $U^1, U^2, Y$ to denote the neural features $u^{(1)}, u^{(2)}$ and target embedding $\tilde{y}$ respectively, and we drop the parentheses on the block indices of the noise $\epsilon$.}
\begin{align*}
    U^1 &= \Lambda_1^{12} z^{12} + \Lambda_1^c z^c + \Lambda_1^1 z^1 + \epsilon^1 \\
    U^2 &= \Lambda_2^{12} z^{12} + \Lambda_2^c z^c + \Lambda_2^2 z^2 + \epsilon^2 \\
    Y &= B^{12} z^{12} + B^1 z^1 + B^2 z^2 + \epsilon^Y
\end{align*}
Here, $z^{12}$ is the task-relevant shared factor, $z^c$ is the task-irrelevant shared nuisance factor, $z^1, z^2$ are view-specific task-relevant private factors, and $\epsilon^j \sim \mathcal{N}(\mathbf{0}, \Sig_{\epsilon^j})$ is the full-rank independent private noise. We assume the nuisance loadings $\Lambda_j^c$ are orthogonal to the combined task-relevant loadings $\tilde{\Lambda}_j = [\Lambda_j^{12}, \Lambda_j^j]$ (i.e., $(\tilde{\Lambda}_j)^\top \Lambda_j^c = \mathbf{0}$), and the task-relevant loading matrices $\tilde{\Lambda}_j$ have full column rank.

For our Gaussian model, any adversarial distribution $Q \in \Delta_P$ must preserve the true marginal predictive covariances ($\Sig_{U^1 U^1}, \Sig_{U^2 U^2}, \Sig_{U^1 Y}, \Sig_{U^2 Y}, \Sig_{YY}$). Because these marginals are fixed, the only parameter the adversary can manipulate is the cross-source covariance $\Sig_{U^1 U^2}$. This allows us to frame any valid adversary $q \in \Delta_P$ as a pure noise-injection strategy: the adversary retains the true generative process $p$ (leaving the latent factors and marginal variances untouched), but injects a cross-covariance matrix $C$ directly between the independent private noises $\epsilon^1, \epsilon^2$. This creates the adversarial joint covariance:
\begin{equation*}
    \Sig_{UU}^{(Q)} = \Sig_{UU}^{(p)} + \begin{bmatrix} \mathbf{0} & C \\ C^\top & \mathbf{0} \end{bmatrix}.
\end{equation*}
Because the injected matrix $C$ alters only the off-diagonal cross-source covariance, any choice of $C$ that keeps the resulting $\Sig_{UU}^{(Q)}$ positive semi-definite guarantees $Q \in \Delta_P$. The question then becomes: what constitutes the `worst-case'' cross-covariance $C$? This depends
on the chosen adversarial objective.

\subsection{The Conditional Independence Adversary ($q^*_{CE}$)}

To formally capture the notion of decoupling the views, we first define the least-informative distribution using the Maximum Entropy (MaxEnt) principle applied to the sources. We seek the distribution $q^*_{CE}$ that maximizes the conditional entropy of the sources $h_Q(U \mid Y)$, which is equivalent to maximizing the determinant $|\Sig^{(Q)}_{U \mid Y}|$:
\begin{equation*}
    q^*_{CE} = \arg\max_{Q} |\Sig^{(Q)}_{U \mid Y}| \quad \text{subject to} \quad \Sig^{(Q)} \succeq 0.
\end{equation*}
By maximizing the residual uncertainty of the sources, the MaxEnt objective forces the observations to be as unstructured and independent as possible once the target $Y$ is known. This adversary implies that, conditioned on $Y$, we have independent $U^1$ and $U^2$.

\begin{lemma}[Least-Informative Coupling via Maximum Conditional Entropy]
\label{lem:nuisance-decoupling}
Suppose the true data distribution $p$ is generated by the model. The true model contains synergy because the shared nuisance factor $z^c$ allows the views to cooperatively suppress shared noise, yielding a cleaner prediction of $Y$. Furthermore, because the independent task-relevant factors ($z^{12}, z^1, z^2$) all independently cause $Y$, they form a classical V-structure. While the private factors $z^1$ and $z^2$ are unconditionally independent, conditioning on their shared effect $Y$ renders all task factors conditionally dependent via the ``explaining away'' effect. At the observation level, this structure induces a negative conditional correlation between the views, mathematically captured by $-\Sig_{U^1 Y} \Sig_{YY}^{-1} \Sig_{Y U^2}$.

To achieve conditional independence, the optimal adversary $q^*_{CE}$ must cancel both of these mechanisms. Under the noise-injection framework, this is achieved by injecting the negative of the true conditional cross-covariance into the private noise:
\begin{equation*}
    C_{CE} = -\Sig^{(p)}_{U^1 U^2 \mid Y}.
\end{equation*}
\end{lemma}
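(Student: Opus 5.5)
The core claim is that the maximizer of $|\Sig^{(Q)}_{U\mid Y}|$ over the admissible perturbations is $C_{CE}=-\Sig^{(p)}_{U^1U^2\mid Y}$. I would first reduce this to a determinant inequality, then verify feasibility, and finally expand the conditional cross-covariance in the latent parameterization to recover the nuisance, shared-signal and V-structure terms.

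First I reduce the objective to the conditional covariance. Since $Q$ only changes the $\Sig_{U^1U^2}$ block, and $\Sig_{UY}$ and $\Sig_{YY}$ are fixed, the Schur complement gives
\begin{equation*}
\Sig^{(Q)}_{U\mid Y}=\Sig^{(Q)}_{UU}-\Sig_{UY}\Sig_{YY}^{-1}\Sig_{YU}=\Sig^{(p)}_{U\mid Y}+\begin{bmatrix}\mathbf{0}&C\\ C^\top&\mathbf{0}\end{bmatrix}.
\end{equation*}
So the diagonal blocks $D_j=\Sig^{(p)}_{U^jU^j\mid Y}$ are invariant under the adversary. The off-diagonal block is $\Sig^{(p)}_{U^1U^2\mid Y}+C$.

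Next I apply Fischer's inequality: for any PSD block matrix, $|\Sig^{(Q)}_{U\mid Y}|\le|D_1|\,|D_2|$, with equality iff the off-diagonal block vanishes. The upper bound $|D_1||D_2|$ does not depend on $C$. It is therefore attained exactly when $\Sig^{(p)}_{U^1U^2\mid Y}+C=0$, i.e.\ $C=C_{CE}$. Uniqueness follows from the equality condition of Fischer's inequality, given that $D_1,D_2\succ0$ because the private noise $\epsilon^j$ is full rank.

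Then I check feasibility. With $C=C_{CE}$ the conditional covariance equals $\operatorname{diag}(D_1,D_2)\succ0$. The joint covariance $\Sig^{(Q)}$ is PSD because it is the Schur-complement extension of $\Sig_{YY}\succ0$ by a PD complement. Hence $Q\in\Delta_P$. For Gaussians a block-diagonal conditional covariance is equivalent to $U^1\perp\!\!\!\perp U^2\mid Y$, which gives the stated conditional independence.

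Finally, for the interpretive decomposition, I compute $\Sig^{(p)}_{U^1U^2}$ directly from the model. Using independence of $z^{12},z^1,z^2,z^c$ and of the noises, only the shared factors contribute: $\Sig^{(p)}_{U^1U^2}=\Lambda_1^{12}(\Lambda_2^{12})^\top+\Lambda_1^c(\Lambda_2^c)^\top$. Subtracting $\Sig_{U^1Y}\Sig_{YY}^{-1}\Sig_{YU^2}$ then yields the three-term closed form in Section~\ref{sec:connection}. Here $\Sig_{U^jY}=\Lambda_j^{12}(B^{12})^\top+\Lambda_j^j(B^j)^\top$, so the subtracted term carries the $z^1$–$z^2$ explaining-away coupling even though $z^1$ and $z^2$ are marginally independent.

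The main obstacle is justifying that every admissible $Q$ can be represented by noise injection: the perturbed matrix must stay PSD while $C$ is attributed to $\epsilon^1,\epsilon^2$. I would sidestep this by optimizing over $C$ directly subject only to $\Sig^{(Q)}\succeq0$, as in the paper's statement of $q^*_{CE}$. The argument above needs nothing more, and the noise-injection reading is only an interpretation.
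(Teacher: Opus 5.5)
Your proposal is correct and follows the paper's own proof: a Schur-complement reduction showing that only the off-diagonal block of $\Sig^{(Q)}_{U\mid Y}$ depends on $C$, then the block determinant bound (which the paper cites as Hadamard's inequality and you correctly call Fischer's) forcing $C_{CE}=-\Sig^{(p)}_{U^1U^2\mid Y}$, then the Schur-complement feasibility check. Your additions are sound refinements that the paper leaves implicit or places elsewhere: uniqueness via $D_j\succeq\Sig_{\epsilon^j}\succ 0$, the explicit three-term expansion (which the paper gives in the subsequent remark), and treating noise injection only as an interpretation of a cross-covariance perturbation constrained by joint positive semi-definiteness.
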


\begin{proof}
The objective is to maximize the determinant of the joint conditional covariance matrix $|\Sig^{(Q)}_{U \mid Y}|$. By the properties of multivariate Gaussians, $\Sig^{(Q)}_{U \mid Y}$ is given by the Schur complement of the target variance $\Sig_{YY}$. Since $Q$ is formed by injecting $C$ into the off-diagonal of the observations, the conditional covariance under $Q$ is the true conditional covariance plus the injected noise block:
\begin{equation*}
    \Sig^{(Q)}_{U \mid Y} = \Sig^{(Q)}_{UU} - \Sig_{UY} \Sig_{YY}^{-1} \Sig_{Y U} = \Sig^{(p)}_{U \mid Y} + \begin{bmatrix} \mathbf{0} & C \\ C^\top & \mathbf{0} \end{bmatrix} = \begin{bmatrix} \Sig_{U^1 \mid Y}^{(p)} & \Sig^{(p)}_{U^1 U^2 \mid Y} + C \\ \Sig^{(p)}_{U^2 U^1 \mid Y} + C^\top & \Sig_{U^2 \mid Y}^{(p)} \end{bmatrix}.
\end{equation*}
By Hadamard's inequality~\citep[Theorem 17.9.4]{CoverThomas06a}, the determinant of a block matrix with fixed diagonal blocks is maximized when the off-diagonal blocks are zero. Therefore, to maximize $|\Sig^{(Q)}_{U \mid Y}|$, the adversary must set $C_{CE} = -\Sig^{(p)}_{U^1 U^2 \mid Y}$. For jointly Gaussian variables, a zero conditional cross-covariance implies conditional independence, achieving $U^1 \perp\!\!\!\perp U^2 \mid Y$. Furthermore, because this choice makes $\Sig^{(Q)}_{U \mid Y}$ a block diagonal matrix whose diagonal blocks are the valid marginals from $p$, the conditional covariance $\Sig^{(Q)}_{U \mid Y}$ is positive semi-definite (PSD). By the properties of the Schur complement, a joint block covariance matrix is PSD if and only if its lower-right block ($\Sig_{YY}$) and its Schur complement ($\Sig^{(Q)}_{U \mid Y}$) are both PSD. Since the target variance $\Sig_{YY} \succ 0$ is fixed by the constraint space, and our choice of $C_{CE}$ ensures $\Sig^{(Q)}_{U \mid Y} \succeq 0$, the resulting full joint covariance matrix $\Sig^{(Q)}$ is guaranteed to be a valid PSD matrix. This confirms that $q^*_{CE}$ is a feasible point in the constraint space $\Delta_P$.
\end{proof}

\begin{remark}[The Three Mechanisms of Synergy: Suppression, Averaging, and Explaining-Away]
This algebraic solution reveals how $q^*_{CE}$ removes the sources of multi-view synergy. By substituting the generative components, the true conditional cross-covariance decomposes into three terms, each corresponding to a synergistic mechanism:
\begin{equation*}
    \Sig^{(p)}_{U^1 U^2 \mid Y} = \underbrace{\Lambda_1^c (\Lambda_2^c)^\top}_{\text{1. Shared Nuisance}} + \underbrace{\Lambda_1^{12} (\Lambda_2^{12})^\top}_{\text{2. True Shared Signal}} - \underbrace{\Sig_{U^1 Y} \Sig_{YY}^{-1} \Sig_{Y U^2}}_{\text{3. Induced V-Structure Effect}}.
\end{equation*}
By injecting $C_{CE} = - \Sig^{(p)}_{U^1 U^2 \mid Y}$ into the noise, the adversary acts as a filter that cancels all three mechanisms:
\begin{enumerate}
    \item \textbf{Suppression (Driven by $z^c$):} When the views share a nuisance factor $z^c$, the optimal joint regression linearly cross-references the views to subtract and cancel out the shared interference. 
    Consider a toy system where $U^1 = Y + z^c$ and $U^2 = z^c$. Marginally, $U^2$ is uninformative about $Y$. However, a joint regression model computing the optimal linear estimator ($W = \Sig_{UU}^{-1} \Sig_{UY}$) assigns a negative weight to $U^2$. This allows the model to subtract the shared noise, computing $U^1 - U^2 = Y$ and recovering the target. $q^*_{CE}$ eliminates this mechanism by injecting $-\Lambda_1^c (\Lambda_2^c)^\top$ to decouple $z^c$ across the views, turning it into independent private noise so it can no longer be subtracted.
    \item \textbf{Averaging (Driven by $z^{12}$):} Because both views observe the true shared signal $z^{12}$ corrupted by independent private noise, they provide redundant measurements. For example, if $U^1 = Y + \epsilon^1$ and $U^2 = Y + \epsilon^2$ (with $\epsilon^1 \perp\!\!\!\perp \epsilon^2$), the optimal joint estimator applies positive weights to both views to compute their average. This averaging boosts the signal-to-noise ratio by suppressing the uncorrelated private noises, yielding a cleaner prediction of $Y$ than either view could individually. $q^*_{CE}$ breaks this capability by injecting $-\Lambda_1^{12} (\Lambda_2^{12})^\top$, which cancels the true shared signal correlation across the views, destroying the redundancy that enabled the averaging.
    \item \textbf{V-Structure Exploitation (Driven by $z^{12}, z^1, z^2$):} Because the independent task-relevant factors all independently cause the target $Y$, they form a classical joint V-structure. Observing $Y$ renders all task factors conditionally dependent via the ``explaining away'' effect, which a joint model exploits. $q^*_{CE}$ breaks this cooperative dependence because its injected noise ($+\Sig_{U^1 Y} \Sig_{YY}^{-1} \Sig_{Y U^2}$) cancels the induced negative explaining-away correlation.
\end{enumerate}
Together, these mechanisms dictate that any latent structure---whether shared task-relevant, shared nuisance, or conditionally induced by a V-structure---provides an opportunity for synergistic predictive power. The baseline distribution $q^*_{CE}$ eliminates them all.
\end{remark}

\subsection{The Minimum Mutual Information Adversary ($q^*_{MI}$)}

The original PID framework~\citep{bertschinger2014quantifying} defines its adversary $q^*_{MI}$ through an alternative objective: to find the distribution that minimizes the joint mutual information $I_Q(Y; U)$. Because $H(Y)$ is fixed across $\Delta_P$, this is equivalent to maximizing the residual generalized variance $|\Sig^{(Q)}_{Y \mid U}|$:
\begin{equation*}
    q^*_{MI} = \arg\max_{Q \in \Delta_P} |\Sig^{(Q)}_{Y \mid U}|.
\end{equation*}

The connection between the two adversarial objectives is governed by the determinant identity:
\begin{equation*}
    |\Sig^{(Q)}_{Y \mid U}| = |\Sig_{YY}| \frac{|\Sig^{(Q)}_{U \mid Y}|}{|\Sig^{(Q)}_{UU}|}.
\end{equation*}
This identity follows directly from factoring the determinant of the joint covariance matrix $\Sig^{(Q)}$ in two equivalent ways via its Schur complements: $|\Sig^{(Q)}| = |\Sig_{YY}| |\Sig^{(Q)}_{U \mid Y}| = |\Sig^{(Q)}_{UU}| |\Sig^{(Q)}_{Y \mid U}|$.
To minimize the mutual information, $q^*_{MI}$ manipulates the cross-covariance matrix $C$ to reduce the denominator $|\Sig^{(Q)}_{UU}|$. Mathematically, injecting correlation reduces the generalized variance of a joint distribution. Because the marginal blocks $\Sig_{U^1 U^1}$ and $\Sig_{U^2 U^2}$ are fixed by $\Delta_P$, the joint determinant factors via the Schur complement as $|\Sig^{(Q)}_{UU}| = |\Sig_{U^1 U^1}| |\Sig_{U^2 U^2} - \Sig_{U^2 U^1}^{(Q)} \Sig_{U^1 U^1}^{-1} \Sig_{U^1 U^2}^{(Q)}|$. Injecting a cross-covariance subtracts a positive semi-definite matrix from the second term, thereby reducing the overall determinant. $q^*_{MI}$ exploits this property by injecting positive correlation between the private noises across the views. This compresses the joint noise distribution to align with the signal, maximizing the residual target variance.

Like $q^*_{CE}$, $q^*_{MI}$ must first decouple the response-irrelevant nuisance factor $z^c$. If $z^c$ remains shared across the views, the optimal estimator will exploit this correlation to subtract the views and cancel the nuisance interference (the suppression mechanism). To prevent this cooperative cancellation and minimize the mutual information $I_Q(Y; U)$, $q^*_{MI}$ adopts the same strategy as $q^*_{CE}$ within the nuisance subspace: it injects negative correlation to decouple $z^c$, turning it into independent private noise.

However, the two adversaries diverge fundamentally in how they treat the task-relevant subspace. The optimization objective of $q^*_{CE}$ yields a clean, term-by-term matrix decomposition that explicitly decouples the task variables to achieve conditional independence ($I_{q^*_{CE}}(U^1; U^2 \mid Y) = 0$). In contrast, the objective of $q^*_{MI}$ mathematically couples these variables together through the inverse covariance matrix. Because this coupling prevents a simple discrete factorization for the task subspace, the adversary must instead holistically inject positive correlation to align the joint noise ellipse with the combined signal direction. While this alignment effectively hinders the regression model, it inevitably introduces a conditional dependence between the views. Thus, the optimal mutual information adversary sacrifices conditional independence ($I_{q^*_{MI}}(U^1; U^2 \mid Y) > 0$) to achieve its objective.

To see this mechanically, consider a simplified scalar system where the private task-relevant factors $z^1, z^2$ are absent. The target is defined purely by the shared factor $Y = z^{12}$, with $z^{12} \sim \mathcal{N}(0, 1)$. The observations are $U = \Lambda z^{12} + n$, where the stacked vector $\Lambda = \begin{bmatrix} \lambda_1 \\ \lambda_2 \end{bmatrix}$ defines the specific 1-dimensional ``signal direction''. Here, $n \sim \mathcal{N}(\mathbf{0}, \Sig_N^{(p)})$ represents the sum of the full-rank private noise and the completely decoupled nuisance factors. Under the noise-injection framework, the adversary injects its cross-covariance $c$ exclusively into this noise block, resulting in the total adversarial noise covariance $\Sig_N^{(Q)} = \begin{bmatrix} v_1 & c \\ c & v_2 \end{bmatrix}$.

The joint mutual information is minimized when $|\Sig^{(Q)}_{Y \mid U}|$ is maximized. Under our explicit model, $\Sig_{YY} = 1$, $\Sig_{YU} = \Lambda^\top$, and $\Sig_{UU}^{(Q)} = \Lambda \Lambda^\top + \Sig_N^{(Q)}$. Substituting these into the residual variance yields:
\begin{equation*}
    |\Sig^{(Q)}_{Y \mid U}| = 1 - \Lambda^\top \left(\Lambda \Lambda^\top + \Sig_N^{(Q)}\right)^{-1} \Lambda.
\end{equation*}
To simplify the inverted term, we apply the Woodbury matrix identity, $(\mathbf{A} + \mathbf{U}\mathbf{C}\mathbf{V})^{-1} = \mathbf{A}^{-1} - \mathbf{A}^{-1}\mathbf{U}(\mathbf{C}^{-1} + \mathbf{V}\mathbf{A}^{-1}\mathbf{U})^{-1}\mathbf{V}\mathbf{A}^{-1}$, setting $\mathbf{A} = \Sig_N^{(Q)}$, $\mathbf{U} = \Lambda$, $\mathbf{V} = \Lambda^\top$, and $\mathbf{C} = 1$. Letting $K(c) = \Lambda^\top (\Sig_N^{(Q)})^{-1} \Lambda$ represent the Signal-to-Noise Ratio (SNR) penalty, the inverse becomes:
\begin{equation*}
    \left(\Sig_N^{(Q)} + \Lambda \Lambda^\top\right)^{-1} = (\Sig_N^{(Q)})^{-1} - (\Sig_N^{(Q)})^{-1} \Lambda \left(1 + K(c)\right)^{-1} \Lambda^\top (\Sig_N^{(Q)})^{-1}.
\end{equation*}
Substituting this expanded inverse back into the residual variance equation yields a step-by-step algebraic reduction:
\begin{align*}
    |\Sig^{(Q)}_{Y \mid U}| &= 1 - \Lambda^\top \left[ (\Sig_N^{(Q)})^{-1} - (\Sig_N^{(Q)})^{-1} \Lambda \left(1 + K(c)\right)^{-1} \Lambda^\top (\Sig_N^{(Q)})^{-1} \right] \Lambda \\
    &= 1 - \underbrace{\Lambda^\top (\Sig_N^{(Q)})^{-1} \Lambda}_{K(c)} + \underbrace{\Lambda^\top (\Sig_N^{(Q)})^{-1} \Lambda}_{K(c)} \left(1 + K(c)\right)^{-1} \underbrace{\Lambda^\top (\Sig_N^{(Q)})^{-1} \Lambda}_{K(c)} \\
    &= 1 - K(c) + \frac{K(c)^2}{1 + K(c)} \\
    &= \frac{1}{1 + K(c)}.
\end{align*}
Thus, maximizing the residual variance is exactly equivalent to minimizing the SNR penalty term $K(c)$:
\begin{equation*}
    K(c) = \begin{bmatrix} \lambda_1 & \lambda_2 \end{bmatrix} \begin{bmatrix} v_1 & c \\ c & v_2 \end{bmatrix}^{-1} \begin{bmatrix} \lambda_1 \\ \lambda_2 \end{bmatrix} = \frac{\lambda_1^2 v_2 + \lambda_2^2 v_1 - 2 c \lambda_1 \lambda_2}{v_1 v_2 - c^2}.
\end{equation*}
By taking the derivative with respect to $c$ and setting it to zero, we find the adversary optimally minimizes the SNR penalty at the roots $c = \lambda_1 v_2 / \lambda_2$ and $c = \lambda_2 v_1 / \lambda_1$. The true optimal adversarial correlation $c^*$ is the unique root that satisfies the positive semi-definite covariance constraint $c^2 < v_1 v_2$. Because $v_1, v_2 > 0$, both roots strictly match the sign of the signal correlation $\lambda_1 \lambda_2$, ensuring $c^*$ always matches this sign as well. This sign-matching mathematically guarantees that the adversarial noise is geometrically aligned with the task signal. For instance, if the views have positively aligned signal loadings ($\lambda_1 \lambda_2 > 0$), an optimal estimator would average the views to boost the signal while canceling independent noise. By injecting a positive noise correlation ($c^* > 0$), the adversary ensures that the noise components constructively reinforce each other upon addition. Conversely, if the signal extraction requires subtraction, the adversary injects negative correlation to ensure the subtracted noises similarly reinforce. Consequently, any linear combination that attempts to boost the signal will simultaneously amplify the geometrically aligned adversarial noise, structurally preventing the model from achieving synergistic noise cancellation.

\begin{remark}[The Mechanics of Adversarial Correlation]
The noise-injection framework captures the distinction between the adversaries: while both counteract cooperative nuisance suppression (i.e., injecting negative correlation to render the nuisance factor $z^c$ independent), their treatment of the task-relevant predictive subspace differs. 

To achieve conditional independence, $q^*_{CE}$ acts as a decoupling filter: it injects the negative conditional cross-covariance $C_{CE} = -\Sig^{(p)}_{U^1 U^2 \mid Y}$. In our scalar example, this corresponds to setting $c = 0$, making the noise conditionally independent so the regression model could average the views to extract a cleaner signal. 

In contrast, $q^*_{MI}$ goes further to minimize mutual information. Rather than just canceling the conditional correlation, it injects a positive correlation $C_{MI}$. By stretching and rotating the Gaussian noise ellipse $\Sig_N^{(Q)}$ so that it mimics the signal direction $\Lambda$, the adversary ensures that any linear estimator weight vector $w$ attempting to extract the signal (maximizing the projection $w^\top \Lambda$) simultaneously captures adversarial noise variance ($w^\top \Sig_N^{(Q)} w$). Mathematically, this minimizes the maximum achievable signal-to-noise ratio ($K(c) = \max_w \frac{(w^\top \Lambda)^2}{w^\top \Sig_N^{(Q)} w}$). Because the noise aligns with the signal, the estimator cannot project away the noise without also annihilating the signal itself, creating a structural dependency.
\end{remark}

\subsection{Shared Properties and Analytic Bounds}

Despite their differing mathematical objectives, both $q^*_{CE}$ and $q^*_{MI}$ are constrained within $\Delta_P$. Because of this shared constraint space, they share structural properties that define the PID decomposition.

\begin{remark}[The Marginal Preservation]
Because any valid adversary $Q \in \Delta_P$ matches the true predictor-target marginals ($I_Q(U^j; Y) = I_P(U^j; Y)$), the views maintain their individual marginal predictive relationships with $Y$ even when the shared task factor $z^{12}$ is adversarially corrupted. 
By the chain rule, $I_Q(U^1, U^2; Y) = I_P(U^{-j}; Y) + I_Q(U^j; Y \mid U^{-j})$. 
Since the marginal term is fixed, minimizing the joint information is identical to minimizing the conditional mutual information. Thus, because $q^*_{CE}$ is a sub-optimal minimizer of joint information ($I_{q^*_{CE}} \ge I_{q^*_{MI}}$), its resulting conditional mutual information is larger, acting as an upper bound on the true unique information.
\end{remark}

\begin{remark}[Analytic Bounds via the Tractable Proxy]
Because $q^*_{CE}$ provides a closed-form but sub-optimal minimization of the joint information ($I_{q^*_{MI}} \le I_{q^*_{CE}}$), substituting it into the PID equations yields analytic bounds on the true optimal quantities. Specifically, it underestimates Synergy ($S_{CE} \le S_{MI}$) and Redundancy ($R_{CE} \le R_{MI}$), while overestimating Unique Information ($U_j^{CE} \ge U_j^{MI}$).

Finally, this formulation reveals that conditional mutual information is distinct from unique information. By the chain rule, $I_P(U^1; Y \mid U^2) = \text{Unique}_1 + \text{Synergy}$. By removing the synergy, the adversaries reduce the conditional mutual information to its unique information component.
\end{remark}

\subsection{Bridge to Multi-View Learning}

This framework provides a theoretical bridge to classic multi-view learning. Classic multi-view algorithms typically invoke two distinct premises that are often bundled together: the \emph{redundancy assumption} (that the views provide identical predictive information, meaning zero unique information), which is relied upon by modern contrastive and bottleneck methods~\citep{federici2020learning, tosh2021contrastive}, and the \emph{conditional independence assumption} (that $U^1 \perp\!\!\!\perp U^2 \mid Y$
), which is foundational to classic algorithms like Co-training~\citep{blum1998combining} and CCA dimension reduction~\citep{Chaudh_09a}. Our baseline distribution $q^*_{CE}$ is the mathematical embodiment of the latter. Conditional independence does \emph{not} imply redundancy. Because $q^*_{CE}$ preserves the true marginals, it allows for amounts of unique predictive information; it merely forbids the views from interacting synergistically. By measuring our tractable proxy $S_{CE}$, we approximate the true synergy and directly quantify how much the true data violates the conditional independence assumption. When $S_{CE} \gg 0$, the views cooperate to cancel noise, indicating that algorithms assuming conditional independence will be suboptimal compared to joint models that can exploit synergistic cancellation.

\end{document}